\title{Social, Legal, Ethical, Empathetic, and Cultural Rules:\\ Compilation and Reasoning (Extended Version)}
\author{
Nicolas Troquard,
Martina {De Sanctis},
Paola Inverardi,\\
Patrizio Pelliccione,
Gian Luca Scoccia
}
\title{My Publication Title --- Single Author}
\author {
    Author Name
}
\title{My Publication Title --- Multiple Authors}
\author {
    % Authors
    First Author Name\textsuperscript{\rm 1},
    Second Author Name\textsuperscript{\rm 2},
    Third Author Name\textsuperscript{\rm 1}
}
\DeclareMathOperator{\UNLESS}{{\sf UNLESS}\xspace}
\DeclareMathOperator{\INWHICHCASE}{{\sf IN~WHICH~CASE}\xspace}
\newtheorem{definition}{Definition}
\newtheorem{example}{Example}
\newtheorem{proposition}{Proposition}
\newtheorem{fact}{Fact}
\newtheorem*{proof}{Proof}
\newtheorem*{nothing}{\!}
\newcommand{\coNP}{\textsc{coNP}\xspace}
\newcommand{\PTIME}{\textsc{PTIME}\xspace}
\newcommand{\compile}[0]{\textsf{compile}}
\newcommand{\qed}{\hfill $\square$}
\newcommand{\new}[1]{#1}
\newcommand{\afemalename}{Ana\xspace}
\newcommand{\cmark}{\ding{51}}%
\newcommand{\xmark}{\ding{55}}%
\newcommand{\qmark}{{\fontfamily{pnc}\selectfont \textbf{?}}}
\newcommand{\hide}[1]{}
\begin{document}

\maketitle

\begin{abstract}
The rise of AI-based and autonomous systems is raising concerns and apprehension due to potential negative repercussions stemming from their behavior or decisions. These systems must be designed to comply with the human contexts in which they will operate. To this extent, \citet{Townsendetal22mindsandmachines} introduce the concept of SLEEC (social, legal, ethical, empathetic, or cultural) rules that aim to facilitate the formulation, verification, and enforcement of the rules AI-based and autonomous systems should obey. They lay out a methodology to elicit them and to let philosophers, lawyers, domain experts, and others to formulate them in natural language. To enable their effective use in AI systems, it is necessary to translate these rules systematically into a formal language that supports automated reasoning. In this study, we first conduct a linguistic analysis of the SLEEC rules pattern, which justifies the translation of SLEEC rules into classical logic. Then we investigate the computational complexity of reasoning about SLEEC rules and show how logical programming frameworks can be employed to implement SLEEC rules in practical scenarios. The result is a readily applicable strategy for implementing AI systems that conform to norms expressed as SLEEC rules.
\end{abstract}
% RAW
% The rise of AI-based and autonomous systems is raising concerns and apprehension due to potential negative repercussions stemming from their behavior or decisions. These systems must be designed to comply with the human contexts in which they will operate. To this extent, Townsend et al. (2022) introduce the concept of SLEEC (social, legal, ethical, empathetic, or cultural) rules that aim to facilitate the formulation, verification, and enforcement of the rules AI-based and autonomous systems should obey. They lay out a methodology to elicit them and to let philosophers, lawyers, domain experts, and others to formulate them in natural language. To enable their effective use in AI systems, it is necessary to translate these rules systematically into a formal language that supports automated reasoning. In this study, we first conduct a linguistic analysis of the SLEEC rules pattern, which justifies the translation of SLEEC rules into classical logic. Then we investigate the computational complexity of reasoning about SLEEC rules and show how logical programming frameworks can be employed to implement SLEEC rules in practical scenarios. the result is a readily applicable strategy for implementing AI systems that conform to norms expressed as SLEEC rules.

\section{Introduction}
The rise of AI-based and autonomous systems is posing concerns and causing apprehension due to potential negative repercussions arising from their behavior or decisions.
Philosophers, engineers, and legal experts are actively exploring avenues to regulate the actions of these systems, which are becoming increasingly integrated into our daily existence. One way to exert control over these systems involves the identification of rules that dictate their conduct during collaboration and interaction with humans.\footnote{This paper was originally published as \citet{DBLP:conf/aaai/TroquardSIPS24}.}

Normative multi-agent systems~\cite{chopra2018handbook} are systems where the behaviour of the participants is affected by the norms established therein. 
\citet{Townsendetal22mindsandmachines} propose a methodology to elicit normative rules for multi-agent systems. The process follows an iterative approach:
\begin{enumerate}
    \item identify high-level normative principles,
    \item identify robot/system capabilities,
    \item create a preliminary normative rule,
    \item \label{item:identify} identify conflicts between normative rule and robot/system capabilities, and if any:
    \begin{itemize}
    \item resolve normative conflicts,
    \item refine normative rule,
    \item go back to~\ref{item:identify}.
    \end{itemize}
\end{enumerate}
In the process, philosophers, ethicists, lawyers, domain experts, and other professionals can extract normative rules from a specific domain. These rules are referred to as \emph{SLEEC rules}, which stands for \emph{social, legal, ethical, empathetic, and cultural} rules.
These rules are aimed to play a crucial role in facilitating the creation of tools and instruments to engineer systems that conform adequately to the established norms, while also aiding in the verification of existing system behavior. 
However, the SLEEC rules obtained by elicitation are typically expressed in natural language. 
Therefore, to be employed effectively in AI systems, these rules need to be translated into a machine-interpretable language. Fortunately, the elicitation process generates rules that are characterized by a clear pattern.

The primary objective of this paper is to present a systematic approach for translating SLEEC rules into classical logic, thereby facilitating their seamless integration in SLEEC-sensitive AI systems.
%
% Although we mostly focus on ethical aspects here \todo{do we?}, social, legal, empathetic or cultural considerations are logically equivalent.

\begin{example}\label{ex:running}
\citet{Townsendetal22mindsandmachines} describe the rule elicitation methodology on a running example in the domain of nursing homes. At the end of the iterative process, the following SLEEC rule is obtained.
\begin{quote}
When the user tells the robot to open the curtains then the robot should open the curtains, UNLESS the user is `undressed' in which case the robot does not open the curtains and tells the user `the curtains cannot be opened while you, the user, are undressed,' UNLESS the user is `highly distressed' in which case the robot opens the curtains.
\end{quote}
%\todo{give details about how did they get here?}
\end{example}
For clarity of exposition and concision, we will follow suit and illustrate the present work with this running example.

\paragraph{Compilation and Reasoning.}
This paper delves into an exploration of the suitability of classical logic and logic programming frameworks for representing and reasoning about \citeauthor{Townsendetal22mindsandmachines}'s rules, emphasizing the simplicity and practicality of these approaches within this context.

%%%%

In computer science, a \emph{compilation} is a translation of computer code in a language into computer code in another language.
By \emph{normative rule compilation} we intend a \emph{translation of SLEEC rules} from natural language \emph{into a logical language}.

\addtocounter{example}{-1}
\begin{example}[continue]
After %a simple 
 disambiguation and identification of the relevant pieces of information, the logical form of the SLEEC rule is ($\lnot\lnot$ is deliberate):
    \[
    (a \land \lnot\lnot d \rightarrow o) \land (a \land  \lnot d \land  \lnot h \rightarrow n \land  s) \land  (a \land  \lnot d \land h \rightarrow o) \enspace .
    \]
We will provide the details of the successive steps in due time.\footnote{For impatient readers, reference \Cref{sec:piecesofinformation} to understand the reading of the propositional variables, and \Cref{ex:compilation}.}
\end{example}
Compiling SLEEC rules into a formal language such as classical logic offers numerous advantages. Firstly, it endows SLEEC rules with a precise semantics, enabling unequivocal determinations of their consistency and whether specific outcomes are necessary consequences of the set of rules. Secondly, it facilitates machine processing of SLEEC rules, allowing the integration of off-the-shelf theorem provers, reasoners, and logic programming frameworks (e.g., PROLOG, Answer Set Programming)\footnote{Robot programming platforms already propose wrapped logic programming frameworks for easier integration, e.g., KnowRob/RosProlog~\cite{rosprolog} and ROSoClingo~\cite{rosoclingo}.} for embedding SLEEC-compliant decision-making modules into AI systems.

% \begin{figure}
% \begin{center}
% \includegraphics[width=0.95\linewidth]{scenario1.png}
% \end{center}
% \caption{\label{fig:full-picture} SLEEC-compliant AI systems.}
% \end{figure}

% \begin{figure}
% \begin{center}
% \includegraphics[width=0.95\linewidth]{scenario1-pretty.png}
% % ACKNOWLEDGMENT: https://www.promeai.com/
% \end{center}
% \caption{\label{fig:full-picture-old} SLEEC-compliant AI systems.}
% \end{figure}

\begin{figure}
\begin{center}
%\begin{overpic}[width=0.95\linewidth,grid,tics=10]{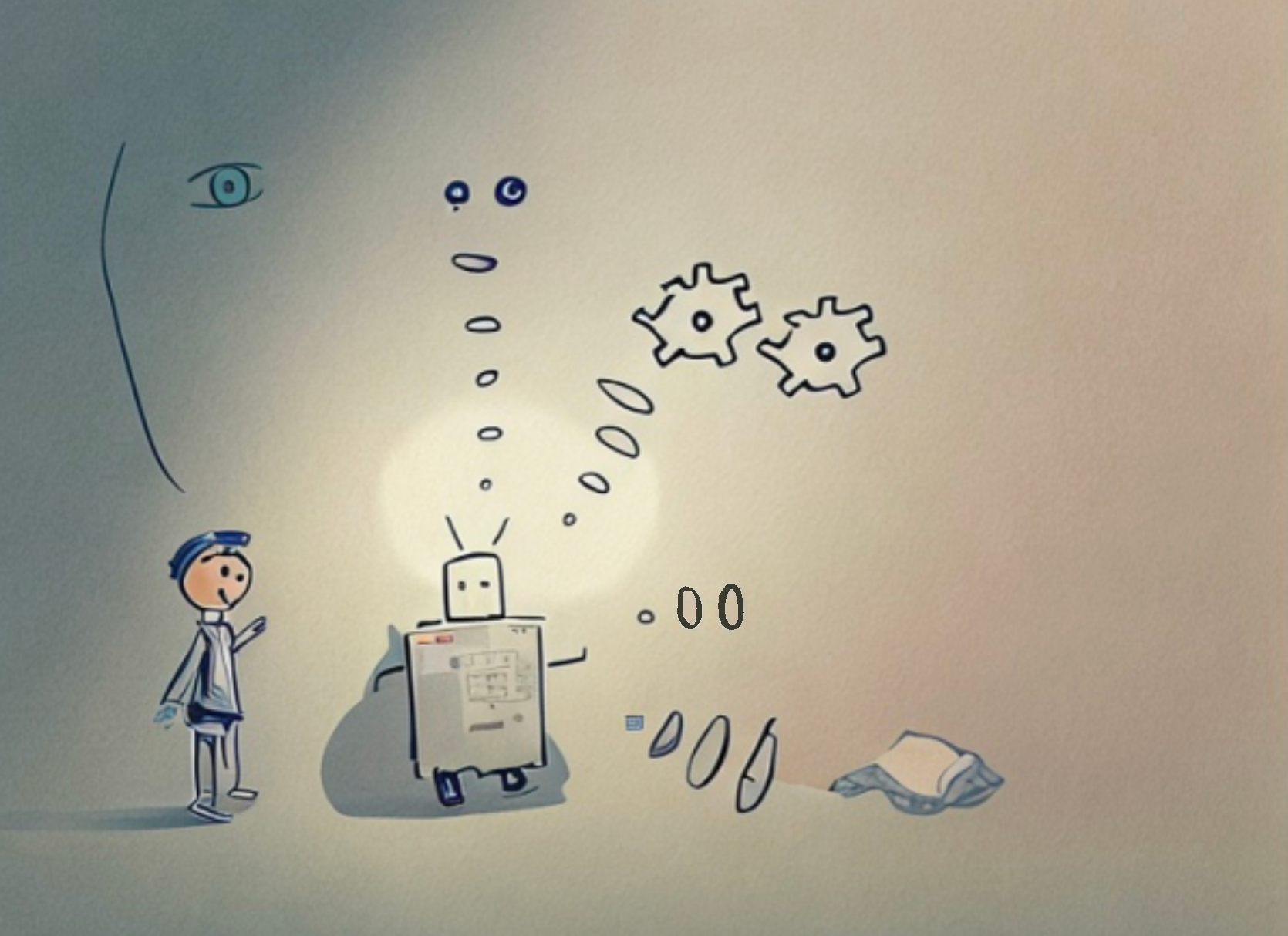}
\begin{overpic}[width=0.95\linewidth]{scenario1-base-edited.png}
 \put (2,66) {\small ``\emph{Open the curtains!}'' (sensing $a$)}
 \put (22,60) {\small \texttt{User is dressed} (sensing $d$)}
 \put (56,37) {\small $\texttt{rules} \land a \land d \models o$}
 \put (60,24) {\begin{minipage}{3.2cm}\small
     \texttt{I must open the curtains} (deciding $o$)
 \end{minipage}}
 \put (60,5) {\small Planning: \texttt{do $o$}}
\end{overpic}
% ACKNOWLEDGMENT: https://www.promeai.com/
\end{center}
\caption{\label{fig:full-picture} SLEEC-compliant AI systems.}
\end{figure}

\addtocounter{example}{-1}
\begin{example}[continue]
\Cref{fig:full-picture} illustrate the central role of reasoning with SLEEC rules. The robot senses the user asking `open the curtains!', setting the propositional variable $a$ to true. The robot senses that the user is dressed, setting the propositional variable $d$ to true. The propositional variable $o$ stands for the obligation to open the curtains.
Automated reasoning is then performed to decide whether the formalised rule (see before), $a$, and $d$ together entail $o$. Since it is case, the robot concludes that it ought to open the curtains. It can now perform some planning task to ensure, e.g., that sometime in the future the curtains are opened.
\end{example}

The literature in multi-agent systems, knowledge representation, and normative reasoning offers numerous ad-hoc formalisms to represent and reason about norms~\cite{10.5555/212501,gabbay2013handbook,gabbay2021handbookvol2}.
%Any attempt at summarizing the state of the art would be incomplete. 
These specialized formalisms for normative reasoning hold a great value for representing complex norms \emph{that have not been elicited into SLEEC rules}.
However, we will demonstrate that the specific rules proposed by \citet{Townsendetal22mindsandmachines} can be effectively represented using a simple classical logic approach. Furthermore, implementing these rules in logic programming frameworks such as PROLOG or Answer Set Programming (ASP) is a straightforward process.
Thus, simple formalisms, supported by a wide range of readily-available software support, are sufficient for developing SLEEC-sensitive AI systems. 
\citet{yaman23normativerules} 
have presented SLEECVAL, a tool designed for the validation of SLEEC rules through the mapping of rules to tock-CSP processes. %\cite{Roscoe2010}. 
Subsequently, a CSP refinement checker can be employed to detect redundant and conflicting rules. \new{A tool to identify conflicts, redundancies, and concerns, aimed at assisting experts during SLEEC rule elicitation, is also presented by \citet{DBLP:conf/kbse/FengMYTCCC23}.}
In contrast, our logic-based compilation is straightforward and allows for versatile reasoning instead of mere verification. We can effortlessly verify the consistency of a set of SLEEC rules and integrate it into a robot development platform to perform decision-making tasks, determining the obligations that arise from a given set of rules and observations as in Figure~\ref{fig:full-picture}.

\paragraph{Outline.}
The reader unfamiliar with propositional logic or logic programming can find some basic definitions in \Cref{sec:logic}. %the appendix.
In \Cref{sec:unlessinwhichcase}, we conduct a linguistic analysis of the SLEEC rules pattern, which justifies the translation of SLEEC rules into classical logic. In \Cref{sec:pieces} we briefly explain how to take care of ambiguities and to identify the relevant pieces of information in SLEEC rules. In \Cref{sec:compilation}, we define the logical translation of SLEEC rules, thus also establishing a formal semantics. In \Cref{sec:complexity}, we study the computational complexity of reasoning about SLEEC rules.
In \Cref{sec:reasoning}, we demonstrate how logic programming frameworks can be employed to implement SLEEC rules in practical scenarios. 
% We discuss SLEEC rules reasoning in the context of planning in \Cref{sec:planning} \todo{do we??}.
We conclude in \Cref{sec:conclusion}.

\section{Elements of Logic and Logic Programming}
\label{sec:logic}

To maintain self-contained clarity, we lay out the basic definitions of classical propositional logic, along with brief elements of logic programming. This foundational knowledge will suffice for understanding the forthcoming arguments.

\subsection{Propositional Logic}

Let $AP$ be a non-empty enumerable set of propositional variables.
A \emph{propositional formula} $\phi$ over $AP$ is defined by the following grammar: $\phi ::= p \mid \lnot \phi \mid \phi \lor \phi$,
% \[
%     \phi ::= p \mid \lnot \phi \mid \phi \lor \phi
% \]
where $p \in AP$. As it is usual, when $\phi$ and $\psi$ are propositional formulas, we define $\phi \land \psi = \lnot (\lnot \phi \lor \lnot \psi)$, and $\phi \rightarrow \psi = \lnot \phi \lor \psi$, and $\top = q \lor \lnot q$ for some $q \in AP$.

A \emph{positive literal} is a propositional variable $p$ and a \emph{negative literal} is the negation of a propositional variable $\lnot p$.
A \emph{clause} is a disjunction of literals. A \emph{formula in CNF} is a conjunction of clauses. A \emph{formula in 3CNF} is a conjunction of clauses of three literals. 
%
%A \emph{Horn formula} is a conjunction of clauses containing at most one positive literal.
%A \emph{(dual-)Horn formula} is a conjunction of clauses containing at most one negative literal.

%%%%

An \emph{interpretation }over $AP$ is a function $v: AP \longrightarrow \{true,false\}$. It can be extended to complex formulas as follows:
\begin{itemize}
    \item $v(\lnot \phi) = true$ iff $v(\phi) = false$.
    \item $v(\phi \lor \psi) = true$ iff $v(\phi) = true$ or $v(\psi) = true$. % (or both).
\end{itemize}
A \emph{model} of a formula $\phi$ over $AP$ is an interpretation $v$ over $AP$ such that $v(\phi) = true$.
A propositional formula $\phi$ is \emph{satisfiable} if there is an interpretation $v$ such that $v(\phi) = true$.
Let $\Gamma$ be a set of propositional formulas, and $\phi$ be a propositional formula. We say that $\Gamma$ \emph{entails} $\phi$ if for all interpretations $v$, if $v(\psi) = true$ for all $\psi \in \Gamma$, then $v(\phi) = true$. We write $\Gamma \models \phi$ when it is the case. We write $\phi \equiv \psi$ when the two formulas $\phi$ and $\psi$ are logically equivalent, that is, when $\{\phi\} \models \psi$ and $\{\psi\} \models \phi$.

\subsection{Logic Programming}

\emph{Logic programming} is a programming paradigm using logical formulas to describe a domain.

A program is a list of clauses of the form:
\begin{scriptsize}
\begin{verbatim}
H :- P1, ..., Pn.
\end{verbatim}
\end{scriptsize}
where \texttt{H} is the \emph{head}, and each \texttt{Pi} is part of the \emph{body}.
It can be read as ``if \texttt{P1} and \ldots and \texttt{Pn} then \texttt{H}''.
The head and the parts of the body can be an arbitrary predicate with arbitrary variables. 
%Typically, the variables in the head are also present in the body. 
E.g.,
\begin{scriptsize}
\begin{verbatim}
motherof(X,Y) :- woman(X), childof(Y,X).
\end{verbatim}
\end{scriptsize}
captures the fact that if \texttt{X} is a woman, and \texttt{Y} is the child of \texttt{X}, then 
\texttt{X} is the mother of \texttt{Y}.
A fact is a clause without a body. E.g.,
\begin{scriptsize}
\begin{verbatim}
woman(jane).
childof(teddy,jane).
\end{verbatim}
\end{scriptsize}
expresses that \texttt{jane} is a woman and \texttt{teddy} is the child of \texttt{jane}.
All predicate names must appear in a rule head (or in a fact).

The comma \texttt{,} represents a conjunction, while the semicolon \texttt{;} represents a disjunction. The negation (as failure) is written \texttt{$\backslash$+} in PROLOG and \texttt{not} in ASP. Finally, a line starting with \texttt{\%} is a comment.

In PROLOG, we can `query' the knowledge base. E.g.,
\begin{scriptsize}
\begin{verbatim}
motherof(M,C).
\end{verbatim}
\end{scriptsize}
will enumerate all couples (\texttt{M}, \texttt{C}) such that \texttt{M} is the mother of \texttt{C}. In our example, it will find exactly one solution: \texttt{M = jane}, \texttt{C = teddy}.

\section{Unless... In Which Case...}
\label{sec:unlessinwhichcase}
We assume that we are given a SLEEC rule, elicited following the methodology of \citet{Townsendetal22mindsandmachines}.
The pivotal logical connective in SLEEC rules is the word `unless'. And importantly, `unless' is typically followed by `in which case'. The aim of this section is to define the meaning of the formulas of the form $\phi \UNLESS \psi \INWHICHCASE \chi$ where $\phi$, $\psi$, and $\chi$ are arbitrary propositional formulas, and to justify the proposed definition.
\new{This justification is crucial
for instilling trust in the compilation process, which is of utmost importance for applications in social contexts.}
We first look at `unless' in isolation, and then provide a logical definition of the construction `unless... in which case...'.

\subsection{`Unless' as XOR}
To persuade both non-logicians and logicians alike of the correct logical interpretation of `unless', it is helpful to take a moment and consider what it is not.
Indeed, it is extremely tempting to interpret $\phi$ UNLESS $\psi$ as an `exclusive or'~\cite{language-proof-logic}. That is, $(\phi \lor \psi) \land \lnot (\phi \land \psi)$, and is equivalent to the biconditional $\lnot \psi \leftrightarrow \phi$. The biconditional $\leftrightarrow$ can also be read `if and only if', and written IFF in English sentences.

For simplicity of exposition, we will step away from SLEEC rules statements for a moment.
Consider the statement~$1a$, and its XOR reading ($1b$).
\begin{itemize}
    \item[$1a$]\label{item:1a}
\afemalename is in Florence UNLESS she is in Rome.
    \item[$1b$]
\afemalename is in Florence IFF she is not in Rome.
\begin{itemize}
    \item[$1b1$] If \afemalename is not in Rome, then \afemalename is in Florence. ($\checkmark$)
    \item[$1b2$] If \afemalename is in Rome, then \afemalename is not in Florence. ($\checkmark$)
\end{itemize}
\end{itemize}
The statement~$1b$ can be decomposed into the statements $1b1$ and $1b2$. Both are %both 
sensible consequences of~$1b$. But only because of some background knowledge that a person cannot be in two places at the same time. To make the issue clearer, consider the statement $2a$, and its XOR reading ($2b$).
\begin{itemize}
    \item[$2a$]
\afemalename attends the meeting UNLESS she is in Rome.
    \item[$2b$]
\afemalename attends the meeting IFF she is not in Rome.
\begin{itemize}
    \item[$2b1$] If \afemalename is not in Rome, then \afemalename attends the meeting. (\cmark)
    \item[$2b2$] If \afemalename is in Rome, then \afemalename does not attend the meeting. (\qmark) 
\end{itemize}
\end{itemize}
The statement~$2b$ can be decomposed into the statements $2b1$ and $2b2$. The statement $2b1$ is a sensible consequence of~$2b$. But what about the statement $2b2$?
%\paragraph{Unless and conversational implicatures.}
It is in fact \emph{only} a \emph{Gricean conversational implicature}~\cite{grice1975logic}, and not part of the meaning of the original statement. 
To see this, one can apply Grice's \emph{cancellation principle}. It postulates that if a conclusion is not part of the meaning of an assertion, it can be `cancelled' without contradiction by some further elaboration.
E.g.,
\begin{itemize}
    \item[$3a$] 
\afemalename attends the meeting UNLESS she is in Rome, \emph{\emph{but} if she is in Rome she may still attend remotely}.
\begin{itemize}
    \item[$3b1$] If \afemalename is not in Rome, then \afemalename attends the meeting.  (\cmark)
    \item[$3b2$] If \afemalename is in Rome, then \afemalename does not attend the meeting. (\xmark)
\end{itemize}
\end{itemize}
It is clear that the statement $3b1$ is a consequence of the statement $3a$. But the statement $3b2$ is not, because the further elaboration `but if she is in Rome she may still attend remotely' makes it clear that even if \afemalename is in Rome, it is possible that she attends the meeting.

Back to \Cref{ex:running},
`if the user is undressed, then the robot does not open the curtains'
is \emph{only a conversational implicature} and is not part of the meaning of 
`the robot opens the curtains UNLESS the user is undressed'. Indeed, we could add 
`and if the user is undressed, then it depends on whether the user is highly distressed'.
And it is precisely what the full statement of the rule takes care of by adding  ``UNLESS the user is `highly distressed' in which case the robot opens the curtains''.
It takes away the suggestion that `if the user is undressed, then the robot does not open the curtains'.

%%%%

\subsection{The Logical Form of `Unless'}
%\paragraph{$\phi \UNLESS \psi := \lnot \psi \rightarrow \phi$}

The linguistic argument above serves to refute the right-to-left implication of the XOR/IFF formulation of `unless'.
In fact, \citet{Quine59}
contends that $\phi \UNLESS \psi$ must be logically interpreted as $\lnot \psi \rightarrow \phi$. It is exactly the left-to-right implication of the XOR/IFF formulation above. 
% For instance, `the robot opens the curtains UNLESS the user is undressed' is logically interpreted as
% `if the user is dressed, then the robot opens the curtains'.
We adopt it as a definition.
\begin{definition}\label{def:unless}
$\phi \UNLESS \psi = \lnot \psi \rightarrow \phi$.
\end{definition}
Translating `unless' statements into classical logic becomes a very simple rewriting process.
\begin{example}
The sentence ``the robot should open the curtains, UNLESS the user is `undressed'\phantom{}'' is logically equivalent to the sentence
``IF the user is not `undressed' THEN the robot should open the curtains''.
\end{example}
%%%%
With a working logical definition of `unless', we effectively establish a semantic condition for it. 
\begin{fact}
    Let $v$ be an interpretation. We have
    \[
    v(\phi \UNLESS \psi) = true \text{ iff } v(\lnot\psi) = false \text{ or }  v(\phi) = true.
    \]
\end{fact}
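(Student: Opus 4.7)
The plan is to chain together \Cref{def:unless}, the abbreviation of $\rightarrow$ given early in \Cref{sec:logic}, and the two recursive semantic clauses for $\lor$ and $\lnot$. First, by \Cref{def:unless}, we have $v(\phi \UNLESS \psi) = v(\lnot \psi \rightarrow \phi)$. Since the paper defines $\alpha \rightarrow \beta$ as the abbreviation $\lnot \alpha \lor \beta$, this equals $v(\lnot \lnot \psi \lor \phi)$, so the task reduces to computing the truth value of a formula whose outermost connective is a disjunction.

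Next, I would apply the semantic clause for $\lor$: $v(\lnot \lnot \psi \lor \phi) = true$ iff $v(\lnot \lnot \psi) = true$ or $v(\phi) = true$. Then I would apply the semantic clause for $\lnot$ to the outer negation of $\lnot \lnot \psi$, rewriting $v(\lnot \lnot \psi) = true$ as $v(\lnot \psi) = false$. Substituting back yields exactly the claim: $v(\phi \UNLESS \psi) = true$ iff $v(\lnot \psi) = false$ or $v(\phi) = true$. The argument is bidirectional at every step because each semantic clause is stated as an `iff'.

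I do not expect any substantive obstacle here; the proof is a mechanical unfolding of definitions. The only minor point of care is to stop at $v(\lnot \psi) = false$ rather than simplifying further to $v(\psi) = true$ via one more application of the semantic clause for $\lnot$, since the statement of the fact is deliberately phrased with the outer negation retained (presumably to match the surface form of the `unless' construction used throughout the section).
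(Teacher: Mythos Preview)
Your proposal is correct and matches the paper's intent: the paper does not give an explicit proof, treating the fact as immediate from \Cref{def:unless} and the standard semantic clauses, and your mechanical unfolding of the definitions of $\UNLESS$, $\rightarrow$, $\lor$, and $\lnot$ is exactly the intended justification.
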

It comes with certain simple properties, some of which might appear surprising.
\begin{fact}\label{fact:unless}
The following properties hold:
     \begin{enumerate}
    \item equivalence to logical OR: $\phi \UNLESS \psi \equiv \phi \lor \psi$.
     \item associativity: $(\phi \UNLESS \psi) \UNLESS \chi \equiv \phi \UNLESS (\psi \UNLESS \chi)$.
     \item\label{item:commutativity} commutativity: $\phi \UNLESS \psi \equiv \psi \UNLESS \phi$.
     \end{enumerate}
\end{fact}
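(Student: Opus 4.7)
The plan is to first establish Property~1, from which Properties~2 and~3 will follow as immediate corollaries, since associativity and commutativity of $\lor$ in classical propositional logic are standard.

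For Property~1, I would unfold the definition. By \Cref{def:unless}, $\phi \UNLESS \psi = \lnot \psi \rightarrow \phi$. Using the definition of material implication introduced earlier in \Cref{sec:logic}, namely $\alpha \rightarrow \beta = \lnot \alpha \lor \beta$, we obtain $\lnot \psi \rightarrow \phi = \lnot \lnot \psi \lor \phi$. Then eliminating double negation (which is immediate from the semantics of $\lnot$) and using commutativity of $\lor$ gives $\psi \lor \phi \equiv \phi \lor \psi$. Alternatively, the same chain of equivalences can be verified semantically using the preceding fact: for any interpretation $v$, $v(\phi \UNLESS \psi) = true$ iff $v(\lnot \psi) = false$ or $v(\phi) = true$, iff $v(\psi) = true$ or $v(\phi) = true$, iff $v(\phi \lor \psi) = true$.

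For Property~2, I would just chain the equivalences from Property~1 with the well-known associativity of $\lor$: $(\phi \UNLESS \psi) \UNLESS \chi \equiv (\phi \lor \psi) \lor \chi \equiv \phi \lor (\psi \lor \chi) \equiv \phi \UNLESS (\psi \UNLESS \chi)$. For Property~3, analogously, $\phi \UNLESS \psi \equiv \phi \lor \psi \equiv \psi \lor \phi \equiv \psi \UNLESS \phi$.

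There is no real obstacle here; the entire proof is essentially a definitional unfolding. The only subtlety worth flagging is that Property~3 (commutativity) may look surprising to readers who informally think of $\UNLESS$ as an asymmetric exception-introducing construct; the preceding discussion in \Cref{sec:unlessinwhichcase} makes clear that the conversational asymmetry is pragmatic rather than truth-conditional, and Property~1 is precisely what encodes this symmetric truth-conditional content.
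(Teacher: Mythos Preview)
Your proof is correct and is exactly the natural argument: unfold \Cref{def:unless} via the material-implication definition from \Cref{sec:logic} to obtain Property~1, then inherit Properties~2 and~3 from the standard associativity and commutativity of $\lor$. The paper itself states \Cref{fact:unless} without proof, so your write-up simply makes explicit what the paper leaves as immediate from the definitions.
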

Despite their surprising nature, these properties of `unless' will not pose a problem, especially in the broader context of SLEEC rules.

\subsection{Unless... in Which Case...}
The SLEEC rules do not use the word `unless' in isolation, but instead in a construction that could be the ternary Boolean expression:
\[\phi \UNLESS \psi \INWHICHCASE \chi \enspace .\]
It is a stronger statement than just ``$\phi \UNLESS \psi$'', specifying not only what should be the case when $\psi$ is not true (i.e., $\phi$), but also what should be the case when $\psi$ is true (i.e., $\chi$).
It can thus be interpreted as the conjunction of $\phi$ UNLESS $\psi$, and IF $\psi$ THEN $\chi$.
Hence, we obtain the following definition:
\begin{definition}\label{def:unlessinwhichcase}
$\phi \UNLESS \psi \INWHICHCASE \chi = (\phi \UNLESS \psi) \land (\psi \rightarrow \chi)$.
\end{definition}
Using \Cref{def:unless}, it is of course equivalent to $(\lnot \psi \rightarrow \phi) \land (\psi \rightarrow \chi)$.
Clearly, we have $\phi \UNLESS \psi \equiv \phi \UNLESS \psi \INWHICHCASE \top$, where $\top$ is the tautology.

This logical definition allows us to establish a semantic condition for it. 
\begin{fact}
    Let $v$ be an interpretation. We have
    \begin{align*}
        v(\phi \UNLESS & ~\psi \INWHICHCASE \chi) = true \text{ iff }\\
        & v(\lnot\psi) = false \text{ or }  v(\phi) = true \text{ and }\\
        & v(\lnot\psi) = true \text{ or }  v(\chi) = true \enspace .
    \end{align*}
\end{fact}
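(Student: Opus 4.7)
The plan is to unfold the two definitions and then mechanically apply the semantic clauses for the propositional connectives already given. Since $\phi \UNLESS \psi \INWHICHCASE \chi$ is not a primitive connective but is defined via \Cref{def:unlessinwhichcase} in terms of $\UNLESS$ and $\rightarrow$, and since $\UNLESS$ is in turn defined via \Cref{def:unless}, the statement is really a corollary of the standard semantic clauses for $\lnot$, $\lor$, $\land$, and $\rightarrow$, once the paper's abbreviations have been expanded.

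Concretely, I would proceed as follows. First, by \Cref{def:unlessinwhichcase}, $v(\phi \UNLESS \psi \INWHICHCASE \chi) = true$ iff both $v(\phi \UNLESS \psi) = true$ and $v(\psi \rightarrow \chi) = true$, using the derived semantic clause for $\land$ (which is immediate from its definition as $\lnot(\lnot\alpha \lor \lnot\beta)$ together with the clauses for $\lnot$ and $\lor$). Second, by \Cref{def:unless}, $v(\phi \UNLESS \psi) = v(\lnot\psi \rightarrow \phi)$, and applying the derived semantic clause for $\rightarrow$ (which, using $\alpha \rightarrow \beta = \lnot\alpha \lor \beta$, yields $v(\alpha \rightarrow \beta) = true$ iff $v(\alpha) = false$ or $v(\beta) = true$) with $\alpha = \lnot\psi$ and $\beta = \phi$ gives the first conjunct of the target condition: $v(\lnot\psi) = false$ or $v(\phi) = true$. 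Third, applying the same clause to $\psi \rightarrow \chi$ yields $v(\psi) = false$ or $v(\chi) = true$; and by the semantic clause for $\lnot$, $v(\psi) = false$ iff $v(\lnot\psi) = true$. This rewrites the second conjunct into exactly the stated form $v(\lnot\psi) = true$ or $v(\chi) = true$.

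There is no genuine obstacle here: the result is a routine semantic calculation, and the whole proof is essentially a chain of ``iff'' steps. The only point requiring minor care is the implicit precedence in the statement, namely that ``and'' binds the two lines into the top-level conjunction while each ``or'' is local to its own line; this parsing is forced by the previous steps, since it is precisely the conjunction of the unfoldings of the two implications. Once this is observed, the equivalence follows line by line from the definitions and the basic semantics recalled in \Cref{sec:logic}.
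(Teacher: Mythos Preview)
Your proposal is correct and matches the paper's treatment: the paper states this Fact without proof, as it follows immediately by unfolding \Cref{def:unlessinwhichcase} and \Cref{def:unless} and applying the standard semantic clauses, exactly as you do. Your remark on the implicit ``and''/``or'' precedence is also the right reading of the displayed statement.
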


The following example will aid in clarifying the intuitions.
\begin{example}
    The sentence ``the robot should open the curtains, UNLESS the user is `undressed' in which case the robot does not open the curtains and tells the user `the curtains cannot be opened while you, the user, are undressed,'\phantom{}'' is logically equivalent to the sentence
``IF the user is not `undressed' THEN the robot should open the curtains, AND, IF the user is `undressed' THEN the robot does not open the curtains AND tells the user [...]''.
\end{example}

After what we said about the commutativity of $\UNLESS$ (\Cref{fact:unless}.\ref{item:commutativity}) it would be a \emph{mistake} to think that 
$\phi \UNLESS \psi \INWHICHCASE \chi$ is equivalent to
$\psi \UNLESS \phi \INWHICHCASE \chi$. Hence, the order of occurrence of the different conditions and exceptions in the SLEEC rule \emph{do} have an importance.
%

%%%%

\section{Identifying the Pieces of Information in SLEEC Rules}
\label{sec:pieces}

In order to formalise a SLEEC rule, one needs to attach a logical term or variable to every \emph{piece of information}. A piece of information in a SLEEC rule is an atomic statement that occurs in the rule. For instance, in \Cref{ex:running}, `the user tells the robot to open the curtains' and `the robot does not open the curtains' are two pieces of information. We identify all the pieces of information of \Cref{ex:running} in \Cref{sec:piecesofinformation}

On close inspection, we can see that the statements in SLEEC rules can be partitioned into two kinds: statements whose truth value is established by sensing the world (e.g., the user tells the robot to open the curtains, or the user is undressed), and statements whose truth value is established by %the
the existence of an obligation for the robot to act (e.g., the robot should open the curtains, or the robot tells the user something).

Some further considerations should be taken into account. A logical term must be attached to only one piece of information. Due to the ambiguity of natural languages, this may require a preliminary analysis to disambiguate some statements in the rules. We illustrate it in \Cref{sec:disambiguation}.

% Also, it may be the case that some pieces of information are not independent from one another. When it is the case, it may be useful to add additional logical constraints to guide the reasoning towards more precise conclusions. We illustrate it in \Cref{sec:domainknowledge}.

\subsection{Disambiguation of Other Words: Tell and Tell}\label{sec:disambiguation}

The rule of the running example contains two occurrences of the word `tells'. 
\begin{quote}
[...] the user \emph{tells} the robot to open the curtains [...] the robot [...] \emph{tells} the user `the curtains cannot be opened [...]'.
\end{quote}
They must be interpreted in two different ways. 
\begin{quote}
[...] the user \emph{asks} the robot to open the curtains [...] the robot [...] \emph{says [to]} the user `the curtains cannot be opened [...]'.
\end{quote}
It is up to the modeller to avoid this sort of ambiguity. When they remain, it is up to the engineer in charge of formalising the rules to detect them.

%%%%

\subsection{Determining the Pieces of Information in the Domain}\label{sec:piecesofinformation}

Once the statements in the SLEEC rules are disambiguated, we are ready to establish a one-to-one mapping between the set of atomic statements in the rules and a set of logical variables.

In \Cref{ex:running}, we can informally partition the pieces of information into two sets: the variables whose value is known from \emph{sensing} ($a$, $d$, $h$), 
\begin{itemize}
\item [$a$] user asks the robot to open the curtains,
\item [$d$] user is dressed,
\item [$h$] user is highly distressed,
\end{itemize}
and the variables whose value is determined by the existence of an \emph{obligation to act} ($o$, $n$, $t$),
\begin{itemize}
\item [$o$] obligation to open the curtains,
\item [$n$] obligation not to open the curtains,
\item [$s$] obligation to says to the user `the curtains cannot be opened [...]'.
\end{itemize}
This partitioning is helpful to make sense of the rules, but it is inconsequential for our solution.

% \subsection{Domain knowledge}
% \label{sec:domainknowledge}
% \todo{?? $\lnot (o \land n)$ https://plato.stanford.edu/entries/logic-deontic/#DeonDileConfToleDeonLogiRejeNC}

\section{Compilation of SLEEC Rules Into Logic}
\label{sec:compilation}

We present a general pattern that SLEEC rules follow, and using the observations of \Cref{sec:unlessinwhichcase}, we explain how they must be translated into classical logic.

\subsection{General Pattern for SLEEC Rules}\label{sec:pattern}

\begin{definition}\label{def:sleecpattern}
    A (general pattern of a) SLEEC rule is an expression of the form
\begin{quote}
\noindent IF $C_0$ THEN $O_0$,\\
UNLESS $C_1$ IN WHICH CASE $O_1$,\\
UNLESS $C_2$ IN WHICH CASE $O_2$,\\
\ldots\\
UNLESS $C_n$ IN WHICH CASE $O_n$.
\end{quote}

\medskip

\noindent where $C_i$ and $O_i$, $0 \leq i \leq n$, are arbitrary propositional formulas over a set of propositional variables $AP$.
\end{definition}
\emph{Typically}, in \Cref{def:sleecpattern} $C_i$ is a Boolean condition about sensed variables, and $O_i$ about obligations to act.

\begin{example}
%After minor rewriting,
%\new{After replacing ``when'' with ``if'' and substituting disambiguated words,}
\new{After substituting disambiguated words,}
we obtain the following intermediate forms for \Cref{ex:running}.
\begin{quote}\centering\small
% When the user tells the robot to open the curtains [$a$] then the robot should open the curtains [$o$], UNLESS the user is `undressed' [$\lnot d$] in which case the robot does not open the curtains [$n$] and tells the user `the curtains cannot be opened while you, the user, are undressed,' [$s$] UNLESS the user is `highly distressed' [$h$] in which case the robot opens the curtains [$o$].
IF the user \emph{asks} the robot to open the curtains [$a$] THEN the robot should open the curtains [$o$], UNLESS the user is `undressed' [$\lnot d$] IN WHICH CASE the robot does not open the curtains [$n$] AND \emph{says to} the user `the curtains cannot be opened while you, the user, are undressed,' [$s$] UNLESS the user is `highly distressed' [$h$] IN WHICH CASE the robot opens the curtains [$o$].
\end{quote}
\[\Downarrow\]
\begin{quote}\centering
\noindent IF $a$ THEN $o$,\\
UNLESS $\lnot d$ IN WHICH CASE $n\land s$,\\
UNLESS $h$ IN WHICH CASE $o$.
\end{quote}
The latter follows the pattern identified in \Cref{def:sleecpattern}.
\end{example}

%%%%

\subsection{Translation Into Classical Logic}
Prior to presenting the propositional logic form of SLEEC rules, we provide some insights into its derivation.
The general pattern of SLEEC rules can be expressed in a parenthesized manner as follows:\footnote{\new{A possible alternative is IF $C_0$ THEN (\ldots (($O_0$,
UNLESS $C_1$ IN WHICH CASE $O_1$),
UNLESS $C_2$ IN WHICH CASE $O_2$),
\ldots
UNLESS $C_n$ IN WHICH CASE $O_n$), for which a similar analysis can be done.}
% \begin{quote}
% \noindent IF $C_0$ THEN (\ldots (($O_0$,\\
% UNLESS $C_1$ IN WHICH CASE $O_1$),\\
% UNLESS $C_2$ IN WHICH CASE $O_2$),\\
% \ldots\\
% UNLESS $C_n$ IN WHICH CASE $O_n$).
% \end{quote}
}
% \begin{minipage}{0.45\textwidth}
% \noindent IF $C_0$ THEN (\ldots (($O_0$,\\
% UNLESS $C_1$ IN WHICH CASE $O_1$),\\
% UNLESS $C_2$ IN WHICH CASE $O_2$),\\
% \ldots\\
% UNLESS $C_n$ IN WHICH CASE $O_n$).
% \end{minipage} %
% \begin{minipage}{0.45\textwidth}
% \noindent IF $C_0$ THEN ($O_0$,\\
% UNLESS $C_1$ IN WHICH CASE ($O_1$,\\
% UNLESS $C_2$ IN WHICH CASE ($O_2$,\\
% \ldots\\
% UNLESS $C_n$ IN WHICH CASE ($O_n$) \ldots ))).
% \end{minipage}

\begin{quote}
\noindent IF $C_0$ THEN ($O_0$,\\
UNLESS $C_1$ IN WHICH CASE ($O_1$,\\
UNLESS $C_2$ IN WHICH CASE ($O_2$,\\
\ldots\\
UNLESS $C_n$ IN WHICH CASE ($O_n$) \ldots ))).
\end{quote}
% Applying \Cref{def:unlessinwhichcase} once, we obtain
% \begin{quote}
% \noindent IF $C_0$ THEN\\ 
% ((IF NOT $C_1$ THEN $O_0$) AND (IF $C_1$ THEN\\ 
% ($O_1$,\\
% UNLESS $C_2$ IN WHICH CASE ($O_2$,\\
% \ldots\\
% UNLESS $C_n$ IN WHICH CASE ($O_n$) \ldots ))).
% \end{quote}
% Carrying on,
% Applying \Cref{def:unlessinwhichcase} $n$ times, we obtain:
% \begin{quote}
% \noindent IF $C_0$ THEN\\ 
% ((IF NOT $C_1$ THEN $O_0$) AND (IF $C_1$ THEN\\ 
% ~((IF NOT $C_2$ THEN $O_1$) AND (IF $C_2$ THEN\\ 
% \ldots\\
% ~~((IF NOT $C_n$ THEN $O_{n-1}$) AND (IF $C_n$ THEN\\
% ~~~($O_n$) \ldots ))).
% \end{quote}
Applying \Cref{def:unlessinwhichcase} $n$ times, and 
\emph{uncurrying} (that is, applying the  left-to-right implication of the following logical equivalence $(p \rightarrow (q \rightarrow r)) \equiv (p \land q \rightarrow r)$), we get the following simplified representation:
\begin{quote}
\noindent IF $C_0$ AND NOT $C_1$ THEN $O_0$,\\ 
IF $C_0$ AND $C_1$ AND NOT $C_2$ THEN $O_1$,\\ 
IF $C_0$ AND $C_1$ AND $C_2$ AND NOT $C_3$ THEN $O_2$,\\
\ldots\\
IF $C_0$ AND $C_1$ AND $C_2$ AND \ldots AND $C_{n-1}$ AND NOT $C_n$ THEN $O_{n-1}$,\\
IF $C_0$ AND $C_1$ AND $C_2$ AND \ldots AND $C_{n-1}$ AND $C_n$ THEN $O_{n}$.\\
\end{quote}
Finally, we obtain the following definition of the compilation of a SLEEC rule into classical propositional logic.
\begin{definition}\label{def:compile}
Let $AP$ be a set of propositional variables.
Let $C_i$ and $O_i$, $0 \leq i \leq n$, be arbitrary propositional formulas over $AP$. Let
    \begin{quote}\centering
\noindent $\sigma$ = IF $C_0$ THEN $O_0$,\\
UNLESS $C_1$ IN WHICH CASE $O_1$,\\
UNLESS $C_2$ IN WHICH CASE $O_2$,\\
\ldots\\
UNLESS $C_n$ IN WHICH CASE $O_n$.
\end{quote}
be a SLEEC rule.
We define the \emph{compilation} of $\sigma$, noted $\compile(\sigma)$, as follows:
\[
\left [ \bigwedge_{0 \leq i \leq n-1} 
\left ( \left ( \bigwedge_{0 \leq j \leq i} ( C_j  ) \land \lnot C_{i+1} \right ) \rightarrow  O_i \right ) \right ]
\]
\[
\land \left [ \left ( \bigwedge_{0 \leq j \leq n} ( C_j ) \right ) \rightarrow O_n \right ] \enspace .
\]
\end{definition}
We are finally ready to give the last step in the compilation of the SLEEC rule of \Cref{ex:running}.
\begin{example}\label{ex:compilation}
The SLEEC rule of \Cref{ex:running} is compiled as follows.
\begin{quote}\centering
\noindent IF $a$ THEN $o$,\\
UNLESS $\lnot d$ IN WHICH CASE $n\land s$,\\
UNLESS $h$ IN WHICH CASE $o$\enspace .
\end{quote}
\[\Downarrow \compile\]
    \[
    (a \land \lnot\lnot d \rightarrow o) \land (a \land  \lnot d \land  \lnot h \rightarrow n \land  s) \land  (a \land  \lnot d \land h \rightarrow o) \enspace .
    \]
    
\end{example}
The semantics of SLEEC rules follows.%\todo{semantics}
\begin{fact}\label{fact:semantics-sleec-rules}
Let $\sigma$ be a SLEEC rule as in \Cref{def:sleecpattern}, and let $v$ be an interpretation. We have: $v(\sigma) = true \text{ iff }$
\begin{align*}
     & \big[\forall 0 \leq i \leq n-1, (\exists 0 \leq j \leq i, v(C_i) = false\\ 
    & \text{ or } v(C_{i+1}) = true \text{ or } v(O_i) = true)\big] \text{ and }\\ 
    & \big [ \exists 0 \leq i \leq n, v(C_i) = false \text{ or } v(O_n) = true \big ] \enspace.
\end{align*}
\end{fact}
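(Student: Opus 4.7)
The plan is to derive the stated semantic condition by directly unfolding $\compile(\sigma)$ from \Cref{def:compile} and repeatedly applying the standard truth-functional semantics of the connectives (conjunction, implication, and negation) laid out in \Cref{sec:logic}. Since the semantics of a SLEEC rule $\sigma$ is fixed by $\compile(\sigma)$, showing $v(\sigma) = true$ amounts to characterising when $v(\compile(\sigma)) = true$.

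First I would split $\compile(\sigma)$ into its two top-level conjuncts using the semantics of $\land$, so that $v(\compile(\sigma)) = true$ iff both the big conjunction over $0 \leq i \leq n-1$ of implications holds and the final implication with antecedent $\bigwedge_{0 \leq j \leq n} C_j$ holds. Applying $v(\bigwedge_i \psi_i) = true$ iff $v(\psi_i) = true$ for every $i$ to the first conjunct yields the outer universal quantifier $\forall 0 \leq i \leq n-1$ in the target condition.

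Next, for each fixed $i$, I would analyse the implication $(\bigwedge_{0 \leq j \leq i} C_j \land \lnot C_{i+1}) \rightarrow O_i$ using the fact that $v(A \rightarrow B) = true$ iff $v(A) = false$ or $v(B) = true$. The antecedent $A$ evaluates to $false$ exactly when some conjunct fails: either $v(C_j) = false$ for some $0 \leq j \leq i$ (producing the existential over $j$ in the stated condition, with the intended reading that the subscript inside $v(C_\cdot)$ is $j$), or $v(\lnot C_{i+1}) = false$, which by the negation clause means $v(C_{i+1}) = true$. Together with the $v(O_i) = true$ branch, this reproduces the three-way inner disjunction. An identical unfolding of the final implication $(\bigwedge_{0 \leq j \leq n} C_j) \rightarrow O_n$ yields the second line: either $v(C_i) = false$ for some $0 \leq i \leq n$, or $v(O_n) = true$.

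The proof is essentially bookkeeping; there is no substantive obstacle. The only step that requires mild care is the implicit double negation when computing the falsity condition of $\lnot C_{i+1}$, which immediately collapses to $v(C_{i+1}) = true$ and lines up with the displayed clause. Once the translations of $\land$, $\rightarrow$, and $\lnot$ are chained, the resulting Boolean condition coincides verbatim with the one in the statement, completing the argument.
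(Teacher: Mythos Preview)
Your proposal is correct; it is exactly the routine unfolding of the truth conditions of $\compile(\sigma)$ from \Cref{def:compile}, including the observation that the existentially bound index inside $v(C_\cdot)$ is meant to be $j$. The paper does not spell out a proof of this fact---it states it as an immediate consequence of the compilation---so your argument supplies precisely the verification the paper leaves implicit.
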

The size of the compiled form of a SLEEC rule is quadratic in the size of the SLEEC rule.
\begin{fact}\label{prop:size-compilation}
$|\compile(\sigma)| = O(|\sigma|^2)$.
\end{fact}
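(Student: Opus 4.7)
The plan is to count the contribution of each $C_j$ and each $O_i$ to $|\compile(\sigma)|$ separately, exploiting the asymmetric fact that the conditions may be duplicated up to $n+1$ times while each obligation appears only once. Writing $N = |\sigma|$ for the size of the source rule, the syntactic form of \Cref{def:sleecpattern} (with its fixed textual markers ``IF\ldots THEN'', ``UNLESS'', ``IN WHICH CASE'') gives
\[
N \;=\; \Theta\!\left( n \,+\, \sum_{i=0}^{n} |C_i| \,+\, \sum_{i=0}^{n} |O_i| \right),
\]
so in particular $n \leq N$, $\sum_i |C_i| \leq N$, and $\sum_i |O_i| \leq N$, up to constant factors.

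Next I would read off directly from \Cref{def:compile} that $\compile(\sigma)$ is a conjunction of exactly $n+1$ implications. For $0 \leq i \leq n-1$, the $i$-th conjunct has antecedent $C_0 \land \cdots \land C_i \land \lnot C_{i+1}$ and consequent $O_i$, so its size is $O\!\left(\sum_{j=0}^{i+1} |C_j| + |O_i|\right)$; the final conjunct has antecedent $C_0 \land \cdots \land C_n$ and consequent $O_n$, of size $O\!\left(\sum_{j=0}^{n} |C_j| + |O_n|\right)$. The crucial observation is that each $C_j$ is copied into at most $n+1$ of these conjuncts, whereas each $O_i$ appears in exactly one.

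Summing over all conjuncts then gives
\[
|\compile(\sigma)| \;\leq\; (n+1) \sum_{j=0}^{n} |C_j| \;+\; \sum_{i=0}^{n} |O_i| \;=\; O(N \cdot N) + O(N) \;=\; O(N^2),
\]
which is the claim. The main ``obstacle'' is really only the bookkeeping of how many conjuncts each $C_j$ appears in; there is no deeper difficulty. It is worth remarking that this quadratic bound is tight in general: when the $|C_j|$ are all of the same order $\Theta(N/n)$ and $n = \Theta(\sqrt N)$, the duplication of the $C_j$'s forces $|\compile(\sigma)| = \Theta(N^2)$, so one cannot hope to improve the estimate without adopting a sharing-based (e.g.\ circuit or DAG) representation of the compiled formula.
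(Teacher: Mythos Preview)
Your argument for the upper bound is correct and is essentially the natural counting argument: each $C_j$ is copied at most $n+1$ times while each $O_i$ occurs once, and since $n$, $\sum_j |C_j|$, and $\sum_i |O_i|$ are all $O(|\sigma|)$, the bound $O(|\sigma|^2)$ follows. The paper itself states this fact without proof, so there is nothing to compare against; your write-up simply supplies the missing justification, and it does so cleanly.

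One small slip in your closing remark on tightness: with $n = \Theta(\sqrt{N})$ and each $|C_j| = \Theta(N/n) = \Theta(\sqrt{N})$, the total contribution of the conditions is at most $(n+1)\sum_j |C_j| = \Theta(\sqrt{N})\cdot\Theta(N) = \Theta(N^{3/2})$, not $\Theta(N^2)$. The quadratic lower bound is actually witnessed more simply by taking all $C_j$ (and $O_i$) of constant size, so that $n = \Theta(N)$; then $C_0$ alone is copied $\Theta(N)$ times, and summing the arithmetic progression of copy counts gives $\sum_{j} (n-j+O(1)) = \Theta(n^2) = \Theta(N^2)$. This does not affect the correctness of the fact as stated (which is only an upper bound), but you may want to fix the example if you keep the tightness comment.
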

%%%%
This is instrumental to establish the complexity results of \Cref{sec:complexity}.

%%%
\section{Complexity of Reasoning About SLEEC Rules}
\label{sec:complexity}

Since we are dealing with a special form of propositional formulas, we provide a few observations on the theoretical complexity of reasoning about SLEEC rules. 
\begin{proposition}\label{prop:complexity}
    Let $AP$ be a set of propositional variables.
    Deciding whether an obligation $\phi$ (more generally, any propositional statement over $AP$) is entailed by a set $\Gamma$ of SLEEC rules is \coNP-complete, even
    if the terms $C_0, \ldots C_n$ and $O_0, \ldots O_n$ in every rule are restricted to be literals in $\{p, \lnot p \mid p \in AP\}$.
\end{proposition}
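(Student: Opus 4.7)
My plan is to prove the two directions separately, exploiting the compilation of \Cref{def:compile} to reduce everything to ordinary propositional reasoning.

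For the upper bound, I would show that the complement problem (non-entailment) lies in \NP. Given $\Gamma$ and $\phi$, the certificate for $\Gamma \not\models \phi$ is an interpretation $v$ over the propositional variables occurring in $\Gamma \cup \{\phi\}$. Verification consists of computing $v(\compile(\sigma))$ for each $\sigma \in \Gamma$ and checking it is \emph{true}, then computing $v(\phi)$ and checking it is \emph{false}. By \Cref{prop:size-compilation}, each $|\compile(\sigma)|$ is $O(|\sigma|^2)$, so the evaluation runs in polynomial time; equivalently, one may test the semantic condition of \Cref{fact:semantics-sleec-rules} directly on each $\sigma$. This places entailment in \coNP.

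For the lower bound, I would reduce from 3-UNSAT. The main obstacle is that the literal restriction on the $C_i, O_i$ looks quite tight: a rule with $n = 0$ compiles to a single 2-clause, and a rule with $n = 1$ generically compiles to two 3-clauses that share the literal $\lnot C_0$. The key gadget that gets around this is: given a 3-clause $l_1 \lor l_2 \lor l_3$, form the SLEEC rule with $n = 1$ and literal parameters $C_0 = \overline{l_1}$, $O_0 = l_3$, $C_1 = l_2$, $O_1 = l_2$, where $\overline{l_1}$ denotes the complementary literal of $l_1$. By \Cref{def:compile} its compilation is $(\overline{l_1} \land \lnot l_2 \rightarrow l_3) \land (\overline{l_1} \land l_2 \rightarrow l_2)$, whose second conjunct is tautological and whose first conjunct is logically equivalent to $l_1 \lor l_2 \lor l_3$. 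The trick of choosing $O_1 = C_1 = l_2$ is what makes the extraneous conjunct vacuous and isolates exactly the desired clause.

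To finish the reduction, given a 3-CNF $\psi = \bigwedge_{i=1}^m c_i$, I would let $\Gamma$ contain the $m$ SLEEC rules produced by the gadget, one per clause, and take $\phi = q$ for a fresh propositional variable $q$ not appearing in $\psi$. Since $q$ is free in every model of $\Gamma$, we have $\Gamma \models q$ iff $\Gamma$ is unsatisfiable, and by the previous paragraph this holds iff $\psi$ is unsatisfiable. The construction is clearly polynomial-time, so entailment is \coNP-hard and hence \coNP-complete, even under the literal restriction on $C_i$ and $O_i$ (and even when the target $\phi$ is itself a literal).
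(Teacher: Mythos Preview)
Your proof is correct and follows essentially the same strategy as the paper: the complement problem is placed in \NP\ by guessing a valuation and checking it against the (polynomially-sized) compilations, and hardness is obtained by encoding each 3-clause as an $n=1$ SLEEC rule whose second compiled conjunct is a tautology. The only notable difference is in the gadget: the paper takes $O_1 = \top$ to trivialise the second conjunct, whereas you take $O_1 = C_1 = l_2$; your choice is arguably cleaner since $\top$ is not a literal and hence the paper's gadget does not literally meet the restriction in the statement, and you also make the entailment target explicit (a fresh atom $q$), which the paper leaves implicit.
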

\begin{proof}
For membership, consider the complement problem, asking whether the set of rules $\Gamma$ does not entail the statement $\phi$.
This can be solved by the following algorithm:
compile the SLEEC rules following \Cref{def:compile} to obtain a conjunction of propositional formulas with only a quadratic blowup in size (\Cref{prop:size-compilation}); non-deterministically choose a valuation; check whether it satisfies all rules in $\Gamma$ and does not satisfy $\phi$. It is correct and sound, and runs in polynomial time on a non-deterministic Turing machine.
This means that our problem is in \coNP.

For hardness, we can polynomially reduce 3SAT.
For $p \in AP$, define $\sim p = \lnot p$ and $\sim \lnot p = p$.
An instance of 3SAT is a formula of the form:
\[
(l_{1,1} \lor l_{1, 2} \lor l_{1, 3}) \land (l_{2,1} \lor l_{2, 2} \lor l_{2, 3})\ldots \land (l_{k,1} \lor l_{k, 2} \lor l_{k, 3}) \enspace ,
\]
with every $l_{i,j} \in \{p, \lnot p \mid p \in AP\}$.
Each clause $(l_{i,1} \lor l_{i,2} 
 \lor l_{i,3})$ can be captured by the SLEEC rule corresponding to
\begin{quote}
IF $\sim l_{i,1}$ THEN $l_{i,3}$,\\
UNLESS $l_{i,2}$ IN WHICH CASE $\top$
\end{quote}
Indeed, its logical form is
$(\sim l_{i,1} \land \sim l_{i,2} \rightarrow l_{i,3}) \land (\sim l_{i,1} \land l_{i,2} \rightarrow \top)$, which is equivalent to $l_{1,1} \lor l_{1, 2} \lor l_{1, 3}$.
It means that deciding an entailment problem from an arbitrary 3CNF over $AP$ with $k$ clauses can be reduced to deciding an entailment problem from $k$ SLEEC rules where every term is a literal. Entailment for 3CNF (3SAT) is \coNP-hard, so hardness follows. \qed
\end{proof}
A consequence of \Cref{prop:complexity} is that reasoning with the rules of~\cite{Townsendetal22mindsandmachines} is (worst-case) hard unless one imposes drastic restrictions.
An example of restriction that makes the entailment of an obligation from a set of SLEEC rules easier is when all $C_i$ are negative literals and all $O_i$ are positive literals.
\begin{proposition}\label{prop:horn-ptime}
    Let $AP$ be a set of propositional variables.
    Deciding whether an obligation $o$ is entailed by a set $\Gamma$ of SLEEC rules is in \PTIME when the terms $C_0, \ldots C_n$ of all rules are all negative literals in $\{\lnot p \mid p \in AP\}$ and all $O_0, \ldots O_n$ of all rules are positive literals in $\{p \mid p \in AP\}$.
\end{proposition}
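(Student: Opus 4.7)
The plan is to show that under the stated syntactic restrictions, the compilation produces only \emph{dual-Horn} clauses (clauses with at most one negative literal), and then appeal to the well-known polynomial-time algorithm for dual-Horn satisfiability to decide entailment via a reduction to unsatisfiability.

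First, I would apply \compile{} from \Cref{def:compile} to each rule in $\Gamma$, yielding, by \Cref{prop:size-compilation}, a polynomial-size conjunction of implications. Writing each $C_j = \lnot p_j$ and each $O_i = q_i$ with $p_j, q_i \in AP$, the antecedent of the implication indexed by $i$ (for $0 \le i \le n-1$) is $\bigwedge_{0 \le j \le i} \lnot p_j \land \lnot\lnot p_{i+1}$, which simplifies to $\bigwedge_{0 \le j \le i} \lnot p_j \land p_{i+1}$. Putting the implication into clausal form yields
\[
\Bigl(\bigvee_{0 \le j \le i} p_j\Bigr) \lor \lnot p_{i+1} \lor q_i \enspace ,
\]
a clause containing exactly one negative literal. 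The final conjunct (for $i = n$) has an antecedent made only of negative literals and translates to $(\bigvee_{0 \le j \le n} p_j) \lor q_n$, a purely positive clause. Hence every clause produced by the compilation is dual-Horn.

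Next, to decide $\Gamma \models o$, I would check the unsatisfiability of $\compile(\Gamma) \cup \{\lnot o\}$. The added unit clause $\lnot o$ has one negative literal and thus preserves the dual-Horn property. Satisfiability of dual-Horn formulas is in \PTIME (e.g., by renaming every variable to its negation to obtain a Horn formula and applying linear-time unit propagation), so unsatisfiability—and therefore the entailment problem—is in \PTIME as well.

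The main delicate point is the clausal translation. One has to notice that although $\lnot C_{i+1}$ inside the antecedent of the $i$-th implication is a doubly-negated literal, it contributes a \emph{single} negative literal $\lnot p_{i+1}$ in the final clause, which is precisely what keeps us inside the dual-Horn fragment. If either the polarity restriction on the $C_j$ or that on the $O_i$ were relaxed, extra negative literals could appear per clause and the argument would no longer yield the polynomial-time bound this way.
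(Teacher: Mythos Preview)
Your proposal is correct and follows essentially the same route as the paper: compile the rules, observe that each resulting clause is dual-Horn (with $\lnot p_{i+1}$ the sole negative literal in the non-final conjuncts and none in the last), add the unit clause $\lnot o$, and reduce to HORNSAT by flipping polarities. The paper's argument is identical in structure and detail.
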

It follows from a simple reduction to HORNSAT which is in \PTIME~\cite{DOWLING1984267} (\new{see appendix 
for details}).

Another consequence of \Cref{prop:complexity} is that there is no reason to restrict ourselves to $C_0, \ldots C_n$ and $O_0, \ldots O_n$ being only literals. %And there is no reason for SLEEC rules modeller to restrict themselves either. 
We can use complex $C_i$ with no impact on the  computational complexity. Since $C_i$ describe states of affairs, it is very convenient %to be able
to be able to write them as arbitrary propositional formulas. We can do so without impacting the complexity of reasoning.

We can use complex $O_i$.
For instance, if two obligations $o_1$ and $o_2$ should be triggered at the same time, it would not alter the complexity of reasoning to simply write $o_1 \land o_2$ for some $O_i$. In the rule of \Cref{ex:running}, we have: ``in which
case the robot does not open the curtains and tells the user `the curtains cannot be opened while you, the user, are undressed'\phantom{}''. We could use a single atom $o_{n,s}$ to represent the obligation to not open the curtains and the obligation to say something to the user, but it does not hurt the computational complexity of reasoning if we use two obligations $n$ and $s$, and simply identify a $O_i$ with $n \land s$. 
We will make good use of this.

%%%
\section{Reasoning With SLEEC Rules}
\label{sec:reasoning}

We build upon the example provided in \Cref{ex:running} to demonstrate the application of the compiled logical form of a SLEEC rule for automated normative reasoning. We give an overview of the reasoning tasks in propositional logic,  Answer Set Programming, and in PROLOG.

In ASP and PROLOG (\Cref{sec:lp-asp} and \Cref{sec:lp-prolog}), we use the following predicates:
\begin{itemize}
    \item \texttt{a(X,Y)}: user \texttt{X} \textbf{a}sks to open \texttt{Y}.
    \item \texttt{s(X,Y)}: obligation to \textbf{s}ay \texttt{Y} to \texttt{X}.
    \item \texttt{o(X)}: obligation to \textbf{o}pen \texttt{X}.
    \item \texttt{n(X)}: obligation to \textbf{n}ot open \texttt{X}.
    \item \texttt{d(X)}: \texttt{X} is \textbf{d}ressed.
    \item \texttt{h(X)}: \texttt{X} is \textbf{h}ighly distressed.
\end{itemize}
We start with reasoning in propositional logic.

\subsection{Propositional Reasoning}

The SLEEC rule of \Cref{ex:running} is compiled into the following propositional formula:
%\footnote{Any off-the-shelf SAT solver can be used: \url{http://www.satlive.org/solvers/}.
% Use \url{https://web.stanford.edu/class/cs103/tools/truth-table-tool/} for testing. \url{(a /\ !!d -> o) /\ (a /\ !d /\ !h -> n /\ s) /\ (a /\ !d /\ h -> o)}      }.
$(a \land \lnot\lnot d \rightarrow o) \land (a \land  \lnot d \land  \lnot h \rightarrow n \land  s) \land  (a \land  \lnot d \land h \rightarrow o)$. Let us note it $\phi_{SLEEC}$. Any off-the-shelf SAT solver can now be used for reasoning.

    % (a \land \lnot\lnot d \rightarrow o) \land (a \land  \lnot d \land  \lnot h \rightarrow n \land  s) \land  (a \land  \lnot d \land h \rightarrow o) \enspace .

Suppose that user asks, user is dressed, user is highly distressed. Together with the SLEEC rule, it \emph{entails} the obligation to open the curtains. This is captured by:
    \[
    {(a \land d \land h)} ~~~ \land ~~~ \phi_{SLEEC} \models o \enspace .
    \]
Now suppose that user asks, user is not dressed, user is not highly distressed. Then, together with SLEEC rule, it \emph{entails} the obligation to not open the curtains and say why:
    \[
    {(a \land \lnot d \land \lnot h)} ~~~ \land ~~~\phi_{SLEEC} \models n \land s \enspace .
    \]
To ensure the soundness of our solution, it is imperative that it does not propose any spurious obligations. Fortunately, our solution meets this requirement, as no additional obligations are entailed by the previous formulas. For instance, $(a \land d \land h) \land \phi_{SLEEC} \not \models n \lor s$, because the interpretation $v = \{a \mapsto true, d \mapsto true, h \mapsto true, \bm{n \mapsto false}, o \mapsto true, \bm{s \mapsto false} \}$ is a model of $(a \land d \land h) \land \phi_{SLEEC}$. Similarly, we have that $(a \land \lnot d \land \lnot h) \land \phi_{SLEEC} \not \models o$,
because the interpretation $v' = \{a \mapsto true, d \mapsto false, h \mapsto false, n \mapsto true, \bm{o \mapsto false}, s \mapsto true \}$ is a model of $(a \land \lnot d \land \lnot h) \land \phi_{SLEEC}$.

%%%%

\subsection{Logic Programming: Answer Set Programming}
\label{sec:lp-asp}

\Cref{ex:running} can be formalised in ASP as follows.
\begin{scriptsize}
\begin{verbatim}
o(Y) :- a(X,Y), d(X).
o(Y) :- a(X,Y), not d(X), h(X).
n(Y) :- a(X,Y), not d(X), not h(X).
s(X, Y) :- a(X,Y), not d(X), not h(X).
a(user,curtains).
d(someoneelse).
h(user).
\end{verbatim}
\end{scriptsize}
We have chosen to represent a situation where the user is not dressed (\texttt{d(user)} is not present) but highly distressed (\texttt{h(user)}).
The clause \texttt{d(someoneelse)} is present for technical reason, because all predicates must appear in the head of a rule.
We obtain the expected model in the ASP system clingo (5.7.0):%\footnote{ \url{https://potassco.org/clingo/run/} can be used for testing.}
\begin{scriptsize}
\begin{verbatim}
Solving...
Answer: 1
d(someoneelse) a(user,curtains) 
o(curtains) h(user)
SATISFIABLE
\end{verbatim}
\end{scriptsize}
That is, if the user asks the robot to open the curtains, and the user is highly distressed, then the only model is when the robot has the obligation to open the curtains (\texttt{o(curtains)}).

\subsection{Logic Programming: PROLOG}
\label{sec:lp-prolog}

A straightforward implementation of \Cref{ex:running} in PROLOG is 
as follows. %presented in \Cref{fig:full-picture}.
%% a(X,Y) user X asks Y
%% s(X,Y) robot says Y to X
%% o(X) obligation to open X
%% n(X) obligation to not open X
%% d(X) X is dressed
%% h(X) X is highly distressed
%\begin{figure}
%\centering
\begin{scriptsize}
\begin{verbatim}
o(Y) :- a(X,Y), d(X),
        write("I have the obligation to open "), 
        write(Y).
o(Y) :- a(X,Y), \+ d(X), h(X),
        write("I have the obligation to open "), 
        write(Y).
n(Y) :- a(X,Y), \+ d(X), \+ h(X),
        write("I have the obligation not to open "), 
        write(Y).
s(X, Y) :- a(X,Y), \+ d(X), \+ h(X),
        write("I can't open "), write(Y), 
        write(" while you "), write(X), 
        write(" are undressed").
a(user,curtains).  % user asks robot open the curtains
%d(someoneelse).
d(user).           % user is dressed
h(someoneelse).
%h(user).          % user is highly distressed
\end{verbatim}
\end{scriptsize}
%\caption{\label{fig:prolog} Implementation of \Cref{ex:running} in PROLOG. We enrich the rules with \texttt{write} predicates to provide more informal details in the answers. Clauses with \texttt{someoneelse} are used to ensure that all predicates appear in the head of a rule.}
%\end{figure}
We simply enriched the rules with \texttt{write} predicates to provide more informal details in the query answers. Clauses with \texttt{someoneelse} are used to ensure that all predicates appear in the head of a rule.

To get all the obligations, we can ask the PROLOG query:%\footnote{\url{https://swish.swi-prolog.org/} can be used for testing.} %(\texttt{;} denotes a disjunction):\footnote{\url{https://swish.swi-prolog.org/} can be used for testing.}
\begin{scriptsize}
\begin{verbatim}
o(Z) ; n(Z) ; s(X, Y).
\end{verbatim}
\end{scriptsize}
In SWI-Prolog (9.1.20), it yields
\begin{scriptsize}
\begin{verbatim}
I have the obligation to open curtains
Z = curtains
\end{verbatim}
\end{scriptsize}
Similar to the machine-processable input program, the output of the reasoning tasks is also machine processable. Notably, the conclusions drawn from the reasoning can be seamlessly transferred to other components of a robot platform, such as a planning module. % We will comment on this in \Cref{sec:planning} \todo{??}.

%%%%

Alternatively, if the user is not dressed (that is if the clause \texttt{d(user)} is not present) then the same query is answered with:
\begin{scriptsize}
\begin{verbatim}
I have the obligation not to open curtains
Z = curtains
I can't open curtains while you user are undressed
X = user,
Y = curtains    
\end{verbatim}
\end{scriptsize}
If the user is not dressed but highly distressed (that is if the clause \texttt{d(user).} is not present, but the clause \texttt{h(user).} is present), 
then the same query is answered with:
\begin{scriptsize}
\begin{verbatim}
I have the obligation to open curtains
Z = curtains
\end{verbatim}
\end{scriptsize}

% \section{Reasoning about norms at runtime / Integrating SLEEC rules in planning frameworks \todo{????????}}
% \label{sec:planning}
% \todo{Do we want to talk about that????????}

\section{Conclusions}
\label{sec:conclusion}
%\todo{todo}

We have bridged the crucial divide that separates the norms articulated as SLEEC rules by philosophers, ethicists, lawyers, domain experts, and other professionals from the realization of agents compliant with these norms on robot development platforms.

In this study, we have conducted a linguistic and logical analysis of the social, legal, ethical, empathetic, and cultural (SLEEC) rules obtained through the elicitation method outlined by \citet{Townsendetal22mindsandmachines}.
We have established a precise logical representation of SLEEC rules and investigated their computational characteristics. Furthermore, we have demonstrated how this definition can be effortlessly employed for SLEEC-compliant automated reasoning by leveraging SAT solvers or logic programming frameworks, all of which are conveniently readily-available on robot development platforms.

% \emph{Compilation of ethical rules}:
%     \begin{itemize}
%         \item Elicited SLEEC rule \cite{Townsendetal22mindsandmachines}
%         \item Disambiguate
%         \item Identify pieces of information $a$, $d$, $h$, $o$, $n$, $s$
%         \item IF $a$ THEN $o$,\\
% UNLESS $\lnot d$ IN WHICH CASE $n\land s$,\\
% UNLESS $h$ IN WHICH CASE $o$
% \item Machine interpretable logical form: \[
%     (a \land \lnot\lnot d \rightarrow o) \land (a \land  \lnot d \land  \lnot h \rightarrow n \land  s) \land  (a \land  \lnot d \land h \rightarrow o)
%     \]
%     or a logic program
%     \end{itemize}

% \emph{In use}:
% \begin{itemize}
%     \item Monitor $a$, $d$, $h$
%     \item Logical reasoning
%     \item Elicit obligations to act $o$, $n$, $s$
%     \item Act
%     \end{itemize}

%%%%

\section*{Ethical Statement}
\new{
The ethical considerations are addressed during the elicitation phase to derive SLEEC rules that are suitable for an application  \cite{Townsendetal22mindsandmachines}. SLEEC rules are designed to prevent the robot's behaviors that could potentially cause harm, thereby avoiding violations of moral principles. Our contribution enables the unambiguous and rigorous implementation of SLEEC rules, contributing to the specification of the actual robot behavior. Nevertheless, a comprehensive ethical review of the entire system remains essential.
}

\section*{Acknowledgements}
The work was partially supported by the PRIN project P2022RSW5W RoboChor: Robot Choreography, the PRIN project 2022JKA4SL HALO: etHical-aware AdjustabLe autOnomous systems, and by the MUR (Italy) Department of Excellence 2023--2027.

\bibliography{bib}

\appendix
\clearpage
%~
%\newpage

\section{Proof of \Cref{prop:horn-ptime}}

A \emph{Horn formula} is a conjunction of clauses containing at most one positive literal. A \emph{dual-Horn formula} is a conjunction of clauses containing at most one negative literal.

\begin{nothing}[\Cref{prop:horn-ptime}]
    Let $AP$ be a set of propositional variables.
    Deciding whether an obligation $o$ is entailed by a set $\Gamma$ of SLEEC rules is in \PTIME when the terms $C_0, \ldots C_n$ of all rules are all negative literals in $\{\lnot p \mid p \in AP\}$ and all $O_0, \ldots O_n$ of all rules are positive literals in $\{p \mid p \in AP\}$.
\end{nothing}
\begin{proof}
    Every conjunction in every SLEEC rule compiled according to \Cref{def:compile} is of the form $C_0 \land \ldots C_i \land \lnot C_{i+1} \rightarrow O_i$ or $C_0 \land \ldots C_i \land C_{n} \rightarrow O_n$.

    The first form is equivalent to $\lnot C_0 \lor \ldots \lnot  C_i \lor C_{i+1} \lor O_i$. Since $C_0, \ldots C_{i+1}$ are negative literals and $O_i$ is a positive literal, it is a dual-Horn clause, with $C_{i+1}$ being the only negative literal ($\lnot C_0, \ldots, \lnot  C_i$ are positive literals).

    The second form is also equivalent to a dual-Horn clause (with only positive literals).

    $\Gamma$ entails $o$ iff the set of dual-Horn clauses composed of all the clauses in $\Gamma$ and the dual-Horn clause $\lnot o$ is 
    not satisfiable. We obtain an equi-satisfiable set of Horn clauses by simply negating all variables. Using \Cref{prop:size-compilation}, we know that this reduction can be done in polynomial time.
    Hence, we know that the problem can be decided in polynomial time because HORNSAT is in \PTIME~\cite{DOWLING1984267}. \qed
\end{proof}

\section{Another Scenario: Post Office}
\label{sec:post-office-scenario}

This scenario makes use of  more intricate relations among the objects of the domain. In fact, this scenario is more aptly represented with a predicative language like PROLOG, than with plain propositional logic.

\medskip

An administrative service requires to come to a public desk, and requires to check-in at the time of arrival.
There is an integer counter. At check-in, the user is assigned a number---the value of the counter---and the counter is then incremented.
The default SLEEC rule for the on-site scheduling is simply as follows.
\begin{quote}
If turn $t$, call user assigned to number $t$.
\end{quote}
A user can specify preferences or soft ethics~\cite{floridi18}. %soft-ethics rules. 
E.g., a user can prefer to swap her turn with another user who checked in later, if this user has some kind of handicap; and a user with a certain handicap can prefer to swap her turn with another user who checked in earlier. The on-site scheduling automaton is capable of recognising the handicaps of the users, and is capable of accessing their soft ethics. It must then take all this information into account when calling the next user. The SLEEC rule becomes as follows.
\begin{quote}
    If turn $t$, call user assigned to number $t$, UNLESS there is a user with handicap $H$, and assigned to a number $t'$, and user with number $t$ opted in giving priority to users with handicap $H$, and user with number $t'$ with handicap $H$ opted in taking the priority from another user, and $t' > t$, IN WHICH CASE call user with number $t'$ AND thank user with number $t$.
\end{quote}
The pieces of information that correspond to sensing are
\begin{itemize}
    \item the turn number,
    \item the user assigned to a certain turn number,
    \item the handicaps of a user,
    \item the soft ethics of a user with regards to giving the priority to users with a certain handicap,
    \item the soft ethics of a user with regards to accepting the priority for their handicap,
\end{itemize}
while those corresponding to obligations to act are
\begin{itemize}
    \item obligation to call a user with a certain turn number,
    \item obligation to thank a user.
\end{itemize}
We use a defined predicate \texttt{can\_priority\_swap} for brevity. It simply captures the statement between `UNLESS' and `IN WHICH CASE' in the raw SLEEC rule. % For a more straightforward compilation, the occurrences of \texttt{can\_priority\_swap} in the body of the rules can be replaced with its own body.
That is to say, the rule simply becomes:
\begin{quote}
    If turn $t$, call user assigned to number $t$, UNLESS `user with number $t$ can swap with user with number $t'$' IN WHICH CASE call user with number $t'$ AND thank user with number $t$,
\end{quote}
with the appropriate definition of \texttt{can\_priority\_swap}.

The following is the result of the compilation. 
\begin{scriptsize}
\begin{verbatim}
% obligations
call_number(X) :- turn(X), \+ can_priority_swap(X,_,_,_,_),
    write("I'm calling "), has_number(P,X), write(P).
call_number(X) :- turn(Y), can_priority_swap(Y,X,_,P,_),
    write("I'm calling "), write(P).
thank(P) :- turn(Y), can_priority_swap(Y,_,P,_,_),
    write("Thank you "), write(P).

% helpers
can_priority_swap(X,Y,P1,P2,H) :- has_number(P1, X), 
    has_number(P2, Y),
    gives_priority(P1, H), asks_priority(P2, H), X < Y, !.
obligation(Z) :- call_number(Z) ; thank(Z).

% data
turn(3).
has_number(ana,3).
has_number(bob,4).
has_number(luc,5).
gives_priority(ana,blind).
asks_priority(luc,blind).
\end{verbatim}
\end{scriptsize}
The data sets the facts that it is turn $3$, \texttt{ana} has turn $3$, \texttt{bob} has turn $4$, \texttt{luc} has turn $5$, \texttt{ana} accepts to give priority to blind persons, and \texttt{luc} accepts the priority if given to him for being blind.
The predicate \texttt{obligation} is defined only for convenience. It can be used directly to query all the obligations \texttt{call\_number} or \texttt{thank} as follows:
\begin{scriptsize}
\begin{verbatim}
obligation(X).
\end{verbatim}
\end{scriptsize}
We get the answer:
\begin{scriptsize}
\begin{verbatim}
I'm calling luc
Z = 5
Thank you ana
Z = ana
\end{verbatim}
\end{scriptsize}
If instead, for instance \texttt{luc} were not asking the priority (that is, if \texttt{asks\_priority(luc,blind)} were not present), the same query would return the answer:
\begin{scriptsize}
\begin{verbatim}
I'm calling ana
Z = 3    
\end{verbatim}
\end{scriptsize}

\end{document}